\newcommand{\vect}[1]{\mathbf{#1}}
\newcommand{\budget}{B}
\newcommand*{\tran}{^{\mathsf{T}}}
\newcommand*{\lrate}{\gamma}
          \newtheorem{theorem}{Theorem}[section]
\newtheorem{lemma}[theorem]{Lemma}
\newtheorem{definition}[theorem]{Definition}
\title{Solving Large-Scale 0-1 Knapsack Problems and Its Application to Point Cloud Resampling}
\author{%
  Duanshun Li \\
%   Department of Civil Engineering\\
  University of Alberta\\
%   Edmonton, AB, Canada \\
  \texttt{duanshun@ualberta.ca} \\
   \And
   Jing Liu, Noseong Park \\
%   Center for Secure Information Systems \\
   George Mason University\\
%   Fairfax, VA, USA\\
   \texttt{\{jliu30, npark9\}@gmu.edu} \\
   \AND
   Dongeun Lee \\
%   Department of Computer Science \\
   Texas A\&M University-Commerce \\
%   Commerce, TX, USA\\
   \texttt{Dongeun.Lee@tamuc.edu} \\
   \And
   Giridhar Ramachandran, Ali Seyedmazloom \\
%   Center for Secure Information Systems \\
   George Mason University\\
%   Fairfax, VA, USA\\
   \texttt{\{gramacha,aseyedma\}@gmu.edu} \\
   \And
   Kookjin Lee \\
%   Systems Engineering and Operations Research \\
   Sandia National Laboratories\\
%   Livermore, CA, USA\\
   \texttt{koolee@sandia.gov} \\
   \And
   Chen Feng \\
%   Systems Engineering and Operations Research \\
   New York University\\
%   Fairfax, VA, USA\\
   \texttt{cfeng@nyu.edu} \\
   \And
   Vadim Sokolov, Rajesh Ganesan \\
%   Systems Engineering and Operations Research \\
   George Mason University\\
%   Fairfax, VA, USA\\
   \texttt{\{vsokolov,rganesan\}@gmu.edu} \\
}
\begin{document}

\maketitle

\begin{abstract}
\let\thefootnote\relax\footnotetext{Duanshun Li, Jing Liu and Noseong Park are listed in alphabetical order and equally contributed.}\let\thefootnote\relax\footnotetext{Noseong Park is the corresponding author.}0-1 knapsack is of fundamental importance in computer science, business, operations research, etc. In this paper, we present a deep learning technique-based method to solve large-scale 0-1 knapsack problems where the number of products (items) is large and/or the values of products are not necessarily predetermined but decided by an external value assignment function during the optimization process. Our solution is greatly inspired by the method of Lagrange multiplier and some recent adoptions of game theory to deep learning. After formally defining our proposed method based on them, we develop an adaptive gradient ascent method to stabilize its optimization process. In our experiments, the presented method solves all the large-scale benchmark KP instances in about a minute, whereas existing methods show fluctuating runtime. We also show that our method can be used for other applications, including but not limited to the point cloud resampling.
\end{abstract}

\section{Introduction}

0-1 knapsack problems (KPs), which choose an optimal subset of products (items) that maximize an objective function under a budget constraint, are a classical NP-hard problem that frequently occurs in real world applications. According to a study, KPs are one of the top-20 most popular problems~\cite{KelPfePis04,Skiena:1999:IAW:333623.333627}. Its application areas include decision making process, resource allocation, cryptography, computer vision, and so forth, to name a few.

Many algorithms have been proposed to efficiently solve them~\cite{KelPfePis04,Martello:1990:KPA:98124}. Some of them are classical combinatorial optimization algorithms whereas others rely on deep learning methods~\cite{DBLP:journals/corr/BelloPLNB16}. Classical algorithms resort to integer linear programming, dynamic programming, and so on. However, their scalability is known to be unsatisfactory oftentimes. As the number of products $n$ increases, their runtime exponentially increases~\cite{Garey:1979:CIG:578533}.

% Prior to this actor-critic training method, researchers had to prepare for many KP instances and their solutions to train prediction models, which means the bottleneck in training is to solve many instances and prepare for training samples. In the actor-critic architecture, we do not need to prepare for training samples but the critic network dynamically provides feedback. 

Recently, a few deep learning-based methods to \emph{predict} solutions were proposed~\cite{NIPS2015_5866,DBLP:journals/corr/BelloPLNB16,DBLP:journals/corr/ZophL16}. Note that we use the term `predict' instead of `solve' because given a problem instance they do predict its solution after training with many problem instance-solution pairs or with specially designed reinforcement learning. Most of them aimed at solving the traveling salesman problem (TSP) --- few of them experimented with KPs. However, they also suffer from the same scalability issue.
% Recently, one reinforcement learning-based method~\cite{DBLP:journals/corr/BelloPLNB16} to \emph{predict} KP solutions was proposed --- their main target was the traveling salesman problem (TSP) but in the Appendix, they predicted solutions to some small-scale KPs. Note that we use the term `predict' instead of `solve' because given a KP instance the method really predicts its solution. Its \emph{actor} network is a LSTM-based \emph{pointer network}~\cite{NIPS2015_5866} and its \emph{critic} network provides feedback on the quality of the KP solution created by the actor. However, this approach also has a scalability issue in the actor network architecture.\from{dongeun}{noseong}{Is this issue specific to the actor portion or both actor and critic networks?}\NP{Yes. I know what you mean. Reviewer may ask we redesign their actor and critic and try again. So their method will be working. Let me decrease this section. It is awkward to use large space to talk about some potentially unself related work.} If there are many products, e.g. $n \geq 1M$, the sequence length of the pointer network becomes too large. Because of this reason, they tested with at most hundreds of products for their experiments. However, KP instances with less than 10,000 products can be exactly solved by classical methods in a few seconds~\cite{Pisinger:2005:HKP:1063636.1063640} so there is little motivation to use the reinforcement learning-based method in real word environments.

KP instances with a large number of products, e.g., $n > 1,000,000$, have never been conquered by any methods in a stable manner --- in general, the runtime of existing methods varies a lot case by case and in some instances, they cannot solve in time. Moreover, product values are not constant sometimes, which makes many existing solutions unapplicable. In this paper, we propose a method to solve such large-scale KPs using advanced deep learning techniques --- we do not predict solutions.

However, one difficulty in designing such a constrained optimization technique on top of deep learning platforms is how to consider the \emph{hard} budget constraint --- we must not violate the budget limitation. We rely on the Karush–Kuhn–Tucker (KKT) conditions and the Lagrange multiplier method, a standard method to convert a constrained problem into a corresponding unconstrained problem. However, we do not stop at applying the Lagrange multiplier method in a naive way but propose a more efficient max-min game formulation (derived from the Lagrange multiplier method) and an adaptive gradient ascent to ensure the hard budget constraint. Our contributions are as follows:
\begin{enumerate}
    \item In deep learning platforms, it is not straightforward to consider the budget constraint of KPs. To resolve this, we design a max-min game (whose equilibrium state is equivalent to the optimal KP solution) and propose an adaptive gradient ascent method that is able to optimize the KP objective function with no cost overrun.
    \item We propose a simple but effective neural network to solve KPs. This proposed neural network is trained with the proposed adaptive gradient ascent method.
    \item We conduct extensive experiments with state-of-the-art baseline methods and benchmark KP datasets.
    \item We show that our proposed method can be used for other applications, including but not limited to point cloud resampling.
    \item Program codes, data, and Appendix will be released upon publication.
\end{enumerate}

\section{Related Work}
The 0-1 KP with $n$ products (items), where a product $i$ has value $v_i$ and cost $c_i$,  is to find binary variables that optimize the following problem:
\begin{align}\begin{split}\label{eq:orig}
    \max_{x_i \in \{0, 1\}} \quad & \sum_{i=1}^{n} v_i x_i,\\
    \textrm{subject to }\quad & \sum_{i=1}^{n} c_i x_i \leq \budget,
\end{split}\end{align}
where $\budget$ is the total budget; $x_i \in \{0,1\}$ denotes if we choose the product $i$ or not.

The value $v_i$ is a constant in many cases but in some more complicated KP instances, $v_i$ is not predetermined and the objective is defined by a value assignment function $g:\{0,1\}^n \rightarrow \mathbb{R}$ as follows:
\begin{align}\begin{split}\label{eq:func}
    \max_{x_i \in \{0, 1\}} \quad & g(x_1, \cdots, x_n),\\
    \textrm{subject to }\quad & \sum_{i=1}^{n} c_i x_i \leq \budget.
\end{split}\end{align}

Therefore, Eq.~\eqref{eq:orig} can be considered as a special case of Eq.~\eqref{eq:func}. In our study, we conduct experiments on both cases. In particular, $g$ is represented by a neural network in one of our experiments.

Many algorithms specialized to KPs~\cite{Pisinger93anexpanding-core,1997:MAK:2752768.2752778,doi:10.1287/mnsc.45.3.414} have been proposed so far. In addition, other general-purpose solvers have been utilized to solve KPs, such as Gurobi, LocalSolver, OR-Tools by Google AI, etc. However, most of these methods consider only Eq.~\eqref{eq:orig} where product values are fixed.

A few recent works \textit{predict} solutions of optimization problems using neural networks~\cite{NIPS2015_5866,DBLP:journals/corr/BelloPLNB16,DBLP:journals/corr/ZophL16}, which mostly focus on the traveling salesman problem (TSP). However, most of these works require numerous pairs of problem instance and its solution to train their prediction models. To produce such training samples, they need to run other combinatorial optimization algorithms, which is time-consuming --- recall that both TSP and KP are NP-hard.

Recently, one method was proposed that does not require the generation of such training samples by Bello et al~\cite{DBLP:journals/corr/BelloPLNB16}. In their work, a reinforcement learning algorithm is used to train the solution prediction model. Their reinforcement learning uses an actor-critic training method where the actor produces a solution and the critic returns its reward (objective value). Nevertheless, their algorithm should still be trained with many problem instances to stabilize the solution quality. And most importantly, its LSTM-based actor is not capable of processing a sequence of $n > 1,000,000$ products. They experimented with at most hundreds of products. However, KP instances with less than 10,000 products can be exactly solved by classical methods in a few seconds~\cite{Pisinger:2005:HKP:1063636.1063640}, which leads to little motivation to use the reinforcement learning-based method in real word environments.

\section{Proposed Method}
We first define our neural network that can be described by a function $f:\theta \rightarrow \{0,1\}^n$, where $\theta$ is the neural network weights. In other words, our neural network $f$ outputs a set of selected products (items) and does not require any inputs other than $\theta$. The input KP instance's values and costs are used to describe the training loss and we train $\theta$ using the loss with our novel gradient ascent method. As mentioned earlier, our neural network $f$ does not predict but solve the input KP instance.

% \CF{I don't understand why $e_i$ should be called as an embedding. It is just a learnable parameter. An embedding should be a representation of an input data.}\NP{Right. I will change the term.}

The architecture of $f$ is shown in Figure~\ref{fig:archi}. For each product $i$, we have a one-dimensional parameter $e_i \in \mathbb{R}$ that will be trained --- in fact, these parameters are the only ones to learn in our model, i.e., $\theta=\{e_1,e_2,\ldots,e_n\}$. The following straight-through and pass-through estimators produce $x_i \in \{0,1\}$ from $e_i$, where $i=1,2,\ldots,n$. In the straight-through estimator (STE),
\begin{align*}
    x_i = \begin{cases} Round(Sigmoid(\tau e_i)),\textrm{ if forward-pass}\\
     Sigmoid(\tau e_i),\textrm{ if backward-pass}
    \end{cases},
\end{align*}where $\tau \geq 1$ is a slope annealing parameter and we perform $\tau \gets r ^ {(p / s)} \tau$ at epoch $p$ with the change rate $r > 1$ and the step parameter $s$~\cite{DBLP:journals/corr/ChungAB16}.
% $\tau = \min(5,\tau * 1.004) $

In the pass-through estimator (PTE),
\begin{align*}
    x_i = \begin{cases} Round(Sigmoid(e_i)),\textrm{ if forward-pass}\\
     e_i,\textrm{ if backward-pass}
    \end{cases}.
\end{align*}

These two estimators are popular techniques to train binary stochastic neurons~\cite{DBLP:journals/corr/BengioLC13}. If $\tau$ is large enough, the Sigmoid itself becomes close to the binary rounding function in STE.
% Note that there are only one type of parameters to train in our model, i.e., the embeddings of products. The embedding value means the logit of choosing product $i$.

The proposed neural network is trained to maximize the KP objective and after training, we take $x_i \in \{0,1\}$ to know which products have been selected. We will describe shortly how we consider the budget constraint.

\begin{figure}
    \centering
    \includegraphics[width=0.5\textwidth]{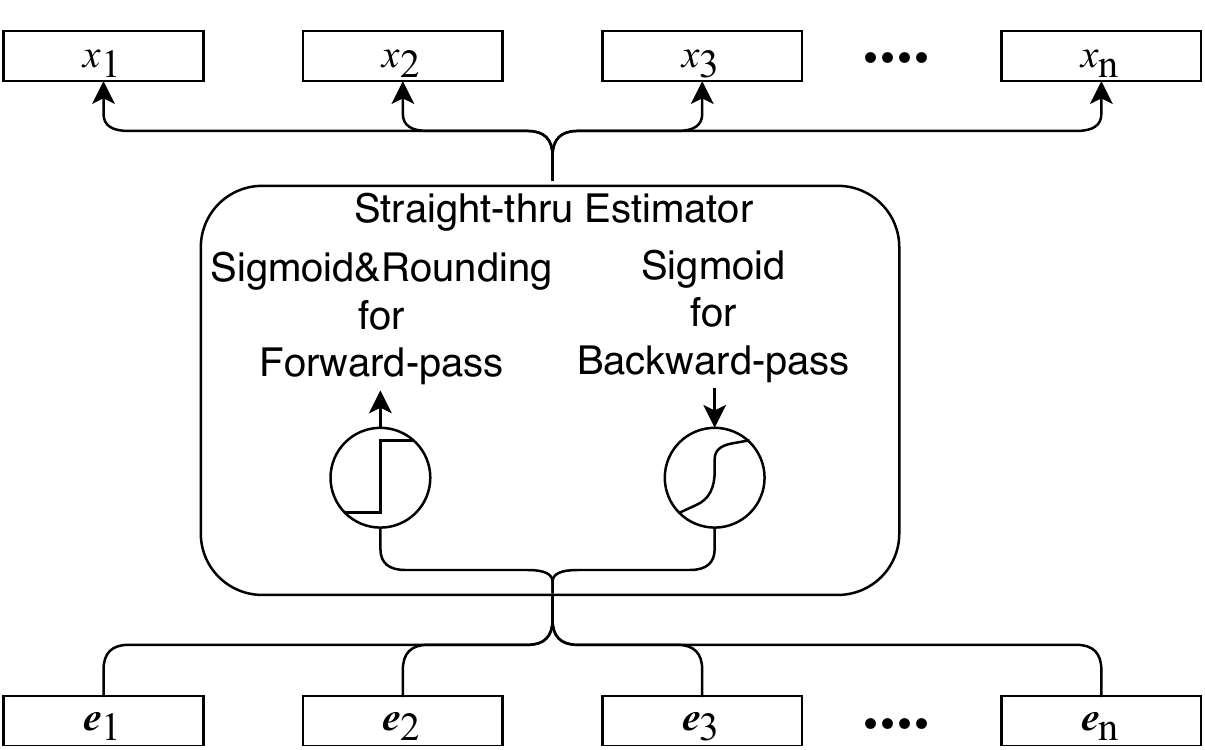}
    \caption{Proposed neural network generating $x_i \in \{0,1\}$ given $e_i \in \mathcal{R}$. In this figure we show the straight-through estimator (STE) which works as the rounding after the Sigmoid for its forward-pass and as the Sigmoid activation only for its backward-pass. We test with the pass-through estimator (PTE) as well. We do not have any intermediate linear, convolutional, or recurrent layers; rather, we directly feed $e_i$ into the estimator.}
    \label{fig:archi}
\end{figure}

With the proposed neural network, KP can be formulated as follows:
\begin{align}\begin{split}\label{eq:f1}
    \max_{\theta} \quad & \vect{v}\tran f(\vect{\theta}),\\
    \textrm{subject to }\quad & \vect{c}\tran f(\vect{\theta}) \leq \budget,
\end{split}\end{align}
or 
\begin{align}\begin{split}\label{eq:f2}
    \max_{\theta} \quad & g(f(\vect{\theta})),\\
    \textrm{subject to }\quad & \vect{c}\tran f(\vect{\theta}) \leq \budget,
\end{split}\end{align}where $\vect{\theta}$ represents the neural network parameters of $f$, $f(\theta) = \vect{x}= [ x_1, x_2, \ldots, x_n ]\tran$, $\vect{v} = [ v_1, v_2, \ldots, v_n ]\tran$, and $\vect{c} = [ c_1, c_2, \ldots, c_n ]\tran$. We use bold fonts to denote (column) vectors.
% \DE{I believe these vectors should be introduced as column vectors, like $\vect{x}= [ x_1, x_2, \ldots, x_n ]^T$.} %\KL{typically square brackets are used for denoting vectors}

\noindent\textbf{Remark.}
The space complexity of $f$ is $\mathcal{O}(n)$ where $n$ is the number of products --- in other words, $n$ of one-dimensional parameters.

Because of this extremely space-efficient architecture, it only takes around a few hundred megabytes of GPU memory to solve with $n = 1,000,000$.
% \begin{proposition}
% The KP objective is concave in Eq.~\eqref{eq:f1} because we use only one linear layer in $f$. For the KP objective in Eq.~\eqref{eq:f2}, it depends on the characteristic of $g$.
% \end{proposition}

% By the above proposition, the KP instance can be efficiently optimized on the proposed neural network --- it is well known that the method of gradient descent works well to minimize (resp. maximize) convex (resp. concave) functions. For this reason,
% We \emph{intentionally} adopt the proposed simple design of $f$, although others may be able to come up with more complicated versions, because of the eff. If $g$ is not concave in Eq.~\eqref{eq:f2}, it becomes lesser efficient. However, we show that our method works well with some complicated $g$ in our experiments.

\subsection{Lagrange Multiplier Approach}
Without the budget constraint, maximizing $\vect{v}\tran \vect{x}$ or $g(\vect{x})$ is straightforward. To consider the constraint, we exploit the Karush–Kuhn–Tucker (KKT) conditions which tell us one necessary condition of optimality.

Under the KKT conditions, Eqs.~\eqref{eq:f1} and~\eqref{eq:f2} can be re-written as follows:
\begin{align}\label{eq:kkt}
\max_{\theta} \quad & o(f(\vect{\theta})) - \lambda b(f(\vect{\theta})),
\end{align} where $o(f(\vect{\theta}))=\vect{v}\tran f(\vect{\theta})=\vect{v}\tran \vect{x}$ (or $o(f(\vect{\theta}))=g(f(\vect{\theta}))=g(\vect{x})$), $b(f(\vect{\theta})) = \vect{c}\tran f(\vect{\theta}) - \budget = \vect{c}\tran \vect{x} - \budget$, and $\lambda \geq 0$ is the Lagrange multiplier.
% \DE{I think $\lambda \geq 0$, not $\lambda \ge 0$.}
% In the KP, we need only one$\lambda$ is a scalar because there is only one budget constraint.
% However, in general the number of Lagrange multipliers is the same as the number of constraints in your problem.
% \CF{I think people in NIPS understand what is Lagrange multipliers. The last sentence seems redundant to me.}\NP{I removed.}

In fact, the method of Lagrange multiplier had been already used to solve several deep learning problems. Our novelty lies in proposing a novel solving technique that we will discuss shortly.

% \CF{it is quite dangerous to claim it in this way, using words like ``totally new concept''.}\NP{Agree. I decrease the tone from `totally new' to `novel'.}

\begin{definition}\label{def:kkt}
Given an objective function $o:\{0,1\}^n \rightarrow \mathbb{R}$ and a constraint function $b:\{0,1\}^n \rightarrow \mathbb{R}$, the KKT conditions say that there exists $\lambda$ such that $\nabla o(f(\vect{\theta}^*)) - \lambda \nabla b(f(\vect{\theta}^*)) = 0$ if $\vect{\theta}^*$ is a (local) optimum and the constraint function satisfies a regularity condition.
\end{definition}

% \CF{in the above definition, the constraint function should be b instead of c, correct?}\NP{Corrected}

Note that the above definition tells us the necessary condition of optimality. One of such regularity conditions is the linear independence of the gradients of constraints at $\vect{\theta}^*$. Since we have only one inequality constraint in our KP formulation, this regularity condition is met by the problem definition itself. %and so is the necessary condition.
Notice that this necessary condition does not mean that every $\theta$ with $\nabla o(f(\vect{\theta})) - \lambda \nabla b(f(\vect{\theta})) = 0$ is globally optimal. However, the Lagrange multiplier approach is widely used and works well in practice for finding a reasonable solution.

Unfortunately, solving Eq.~\eqref{eq:kkt} in practice can become unstable without a proper setting of $\lambda$. To this end, we propose the following max-min game to find a stable $\lambda$ and an optimized $\theta$:
\begin{align}\label{eq:kkt2}
\max_{\theta}\;\min_{\lambda \geq 0} \quad & o(f(\vect{\theta})) - \lambda b(f(\vect{\theta})) + \delta \lambda^2,
\end{align}where $\delta \lambda^2 \geq 0$ is a regularization term to prevent $\lambda$ from getting too large. If $b(f(\vect{\theta})) \leq 0$, i.e., no cost overrun, we prefer a small $\lambda$ to emphasize the KP  objective and the inner minimization makes it small. If $b(f(\vect{\theta})) > 0$, however, $\lambda$ should be large enough to emphasize the constraint while training $\theta$. Sometimes the inner minimization makes $\lambda$ larger than necessary (because $b(f(\vect{\theta})) > 0$ and a large $\lambda$ can minimize Eq.~\eqref{eq:kkt2} easily). To prevent this, we add the regularization term.

% Recall that we maximize Eq.~\eqref{eq:kkt2} to train $\theta$ using a gradient-based method. If there is no regularization term and $b(f(\vect{\theta}))$ is positive,\DE{Can $b(f(\vect{\theta}))$ ever be positive? We have the constraint keeping $b(f(\vect{\theta}))$ less than or equal to zero.}\NP{Because we changed to a unconstrained problem and use a gradient-based method. It is possible.} then it can be very easily minimized by continuously increasing $\lambda$ during the inner minimization part. The regularization term prevents such a large $\lambda$.

The rationale behind the proposed max-min game between $\theta$ and $\lambda$ is that $\theta$ tries to maximize the objective while $\lambda$ tries to suppress the selection of products to minimize the budget constraint term. However, by adding the regularization we try to limit $\lambda$ within a reasonable range. Therefore, it will converge to a balancing point between the objective and the budget constraint.

\begin{lemma}
Given a fixed $\theta$, $\lambda = \max\left\{0,\frac{b(f(\theta))}{2\delta}\right\}$ minimizes Eq.~\eqref{eq:kkt2}.
\end{lemma}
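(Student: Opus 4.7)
The plan is to treat Eq.~\eqref{eq:kkt2} as a one-dimensional constrained optimization problem in $\lambda$ with $\theta$ fixed, and exploit the fact that it is a strictly convex quadratic in $\lambda$. Writing $B \coloneqq b(f(\vect{\theta}))$ and $C \coloneqq o(f(\vect{\theta}))$, both constants under the fixing of $\theta$, the inner objective becomes
\begin{equation*}
h(\lambda) \;=\; C \;-\; B\,\lambda \;+\; \delta\,\lambda^2,
\end{equation*}
which, because $\delta>0$, is strictly convex in $\lambda$ on all of $\mathbb{R}$.

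First I would compute the unconstrained stationary point by setting $h'(\lambda) = -B + 2\delta\lambda = 0$, giving $\lambda^\star = B/(2\delta)$. Strict convexity guarantees this is the unique global minimizer of $h$ on $\mathbb{R}$. Next I would enforce the feasibility constraint $\lambda \geq 0$ by splitting into two cases. If $B \geq 0$ then $\lambda^\star \geq 0$ is already feasible, so the constrained minimizer coincides with $\lambda^\star = B/(2\delta)$. If $B < 0$ then $\lambda^\star < 0$ is infeasible; since $h$ is convex and strictly increasing on $[\lambda^\star, \infty)$, in particular on $[0,\infty)$, the constrained minimum over $\lambda \geq 0$ is attained at the boundary $\lambda = 0$. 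Combining the two cases yields the closed form $\lambda = \max\{0, B/(2\delta)\}$, which is exactly the claim.

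Since this is essentially a textbook projected-quadratic argument, there is no real obstacle; the only subtlety is remembering to project onto the feasible half-line $\lambda \geq 0$ rather than reporting the unconstrained minimizer blindly. One could equivalently justify the answer via the KKT conditions for the inner minimization (stationarity $-B + 2\delta\lambda - \mu = 0$, dual feasibility $\mu \geq 0$, primal feasibility $\lambda \geq 0$, and complementary slackness $\mu\lambda = 0$), which recovers the same two cases and the same closed form, but the direct case analysis above is the cleanest route.
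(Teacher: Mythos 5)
Your proof is correct: with $\theta$ fixed the inner objective is the strictly convex quadratic $o - \lambda b + \delta\lambda^2$ in $\lambda$ (assuming $\delta>0$, which the lemma's statement implicitly requires for $b/(2\delta)$ to make sense), and projecting its unconstrained minimizer $b/(2\delta)$ onto the feasible half-line $\lambda\geq 0$ gives exactly $\max\{0,\, b(f(\theta))/(2\delta)\}$. The paper defers all proofs to an appendix that is not included in this manuscript, so there is nothing to compare against directly, but your argument is the standard one and is surely what the authors intend; it also matches how they use the lemma, since substituting $\lambda=\max\{0,b/(2\delta)\}$ back into Eq.~\eqref{eq:kkt2} yields their Eq.~\eqref{eq:kkt3} with $\beta=1/(2\delta)$.
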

\begin{proof}
All proofs are in Appendix that will be released upon publication.
\end{proof}
% Eq.~\eqref{eq:kkt2} is a parabolic function w.r.t. $\lambda$ after fixing $\theta$, and its global minimum is where the following partial derivative becomes zero.
% \begin{align*}
% \frac{\partial o(f(\vect{\theta})) - \lambda b(f(\vect{\theta})) + \delta \lambda^2}{\partial \lambda} = - b(f(\vect{\theta})) + 2\delta \lambda
% \end{align*}
% Thus, $- b(f(\vect{\theta})) + 2\delta \lambda = 0$ yields $\lambda = \frac{b(f(\theta))}{2\delta}$. By the constraint that $\lambda \geq 0$, $\lambda = \max\{0,\frac{b(f(\theta))}{2\delta}\}$.
% \end{proof}

% \begin{lemma}\label{thm:dynamic}
% The proposed max-min zero-sum game method is able to dynamically activate/deactivate the budget constraint penalty term when necessary.
% \end{lemma}
% \begin{proof}
% We know that $\lambda = \max\{0,\frac{b(f(\theta))}{2\delta}\}$ is the optimal for the inner minimization part. $\lambda = 0$ if $b(f(\theta)) \leq 0$, which means no cost overrun so the gradients will happen only by the KP objective. If $b(f(\theta)) > 0$, there is a cost overrun and $\lambda = \frac{b(f(\theta))}{2\delta}$ so it will activate the budget constraint term.
% \end{proof}

Since we know the optimal value of $\lambda$, Eq.~\eqref{eq:kkt2} can be rewritten as follows:
\begin{align*}
\max_{\theta}\quad & o(f(\vect{\theta})) - \frac{\max\{0,b(f(\vect{\theta}))\}^2}{4\delta}.
\end{align*}
Equivalently,
\begin{align}\label{eq:kkt3}
\max_{\theta}\quad & o(f(\vect{\theta})) - \frac{1}{2}\beta R(b(f(\vect{\theta})))^2,
\end{align}where $\beta = \frac{1}{2\delta}$, and the function $R: \mathbb{R} \rightarrow \mathbb{R}^{+}$ denotes the rectifier.

We solve Eq.~\eqref{eq:kkt3} instead of Eq.~\eqref{eq:kkt2} because we can remove the inner minimization. Eq.~\eqref{eq:kkt3} has the squared budget constraint penalty, which greatly drives the optimization process toward the zero budget constraint point --- note that in KPs, optimal solutions are achieved when $b(f(\vect{\theta})) \lesssim 0$.

\begin{theorem}
The equilibrium state of the proposed max-min game corresponds to the optimal KP solution with a proper setting of $\beta$ mentioned in the proof.
\end{theorem}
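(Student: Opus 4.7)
The plan is to use the preceding Lemma to collapse the inner minimization, reducing the equilibrium claim to a statement about the single-variable penalized maximization Eq.~\eqref{eq:kkt3}, and then to exhibit a threshold on $\beta$ above which every maximizer is both feasible and optimal for the original KP.

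First, I would substitute $\lambda^\star(\theta) = \max\{0,\,b(f(\theta))/(2\delta)\}$ from the Lemma into the inner objective of Eq.~\eqref{eq:kkt2}; the direct calculation already carried out in the display preceding Eq.~\eqref{eq:kkt3} shows that the inner-minimized value equals $\Phi_\beta(\theta) := o(f(\theta)) - \tfrac{1}{2}\beta R(b(f(\theta)))^2$ with $\beta = 1/(2\delta)$. Consequently, $(\theta^\star, \lambda^\star)$ is an equilibrium of the max-min game exactly when $\theta^\star$ maximizes $\Phi_\beta$ and $\lambda^\star = \lambda^\star(\theta^\star)$, so the theorem reduces to showing that every maximizer of $\Phi_\beta$ yields an optimal KP solution.

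Second, I would establish feasibility of every maximizer under a suitable choice of $\beta$. Since the forward-pass rounding forces $f(\theta) \in \{0,1\}^n$, the quantity $b(f(\theta)) = \vect{c}\tran f(\theta) - \budget$ takes only finitely many values. Define $\epsilon := \min\{b(\vect{x}) : \vect{x} \in \{0,1\}^n,\, b(\vect{x}) > 0\}$ (with the convention $\epsilon = +\infty$ when no infeasible $\vect{x}$ exists, a trivial case) and let $M$ be any upper bound on $|o(f(\theta))|$, e.g.\ $\sum_i |v_i|$ in the linear case or $\sup_{\vect{x} \in \{0,1\}^n}|g(\vect{x})|$ in general. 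For any $\beta > 4M/\epsilon^2$, an infeasible $\theta$ gives $\Phi_\beta(\theta) \leq M - \tfrac{1}{2}\beta \epsilon^2 < -M$, which is strictly below the value of $\Phi_\beta$ at any feasible $\theta$ (where $\Phi_\beta \geq -M$ since the penalty vanishes). Hence every maximizer of $\Phi_\beta$ must be feasible.

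Third, on the feasible set $R(b(f(\theta))) = 0$ so that $\Phi_\beta(\theta) = o(f(\theta))$, and maximizing $\Phi_\beta$ reduces to the original KP Eq.~\eqref{eq:f1} (or Eq.~\eqref{eq:f2}); equilibrium therefore coincides with KP optimality. The hardest step is really the second: pinning down $\epsilon$ and $M$ requires inspecting the discrete structure of the instance, and while the bound $\beta > 4M/\epsilon^2$ establishes existence of a proper $\beta$, it does not prescribe a practical value. This is precisely the gap that the adaptive gradient ascent procedure introduced in the next section is designed to close, by adjusting $\beta$ (equivalently $\delta$) during training rather than fixing it a priori.
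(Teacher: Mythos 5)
The paper never actually gives a proof of this theorem --- it defers all proofs to an appendix ``to be released upon publication'' --- so there is nothing in the source to compare your argument against line by line. On its own merits, your proof is correct and is almost certainly the intended one: after using the Lemma to collapse the inner minimization into the quadratic penalty $\Phi_\beta$, the decisive observation is that the forward pass forces $f(\theta)\in\{0,1\}^n$, so the smallest positive constraint violation $\epsilon$ is bounded away from zero and the penalty becomes \emph{exact} for any finite $\beta>4M/\epsilon^2$; this is precisely the kind of ``proper setting of $\beta$'' the theorem statement alludes to, and it is consistent with the paper's ensuing remark that a large $\beta$ is preferred. Two small caveats: your symbol $\epsilon$ collides with the (unrelated) $\epsilon$ of Eq.~\eqref{eq:final}, and your reduction identifies the ``equilibrium state'' with a \emph{global} maximizer of $\Phi_\beta$ --- the charitable reading, but worth stating explicitly, since the paper itself concedes that stationary points of the Lagrangian need not be globally optimal and the gradient dynamics can only be guaranteed to find local equilibria.
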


The above equilibrium theorem says that the role of $\beta$ is important to make the optimization process converge to the optimal KP solution and a large $\beta$ is preferred. If $\beta$ is too large, however, it will disturb the gradient ascent algorithm as soon as there is a small cost overrun. Therefore, we propose the following adaptive gradient ascent algorithm that dynamically controls $\beta$.

\subsection{Adaptive Gradient Ascent Method}

% \subsection{ReLU Approach}
% The previous max-min zero-sum game approach that is able to dynamically activate/deactivate the budget constraint term let us test the rectified linear unit (ReLU). In fact, the following heuristic method based on ReLU works in a similar way:
% \begin{align}\label{eq:relu}
% \max_{\theta} \quad & o(f(\vect{\theta})) - \beta R(c(f(\vect{\theta}))).
% \end{align}

% In Eq.~\eqref{eq:relu}, the constraint term $R(c(f(\vect{\theta})))$ does not generate gradients as long as the budget is not violated, which is same as what we proved in Theorem~\ref{thm:dynamic}. However, this ReLU-based method removes the inner minimization part of Eq.~\eqref{eq:kkt2} so it is more efficient to optimize Eq.~\eqref{eq:relu} than Eq.~\eqref{eq:kkt2}.
Because we maximize $L = o(f(\vect{\theta})) - \frac{1}{2}\beta R(b(f(\vect{\theta})))^2$, we need to use the gradient ascent method with $\vect{\theta} \gets \vect{\theta}+\lrate \nabla L$ where $\lrate > 0$ is the learning rate. When $b(f(\vect{\theta}))\leq0$, i.e., no cost overrun, we do not need to control $\beta$ because of the rectifier $R$. However, we simply set $\beta=0$ to enable more aggressive search for the KP objective without needing to consider the budget constraint.

When $b(f(\vect{\theta}))>0$, however, we need to enforce that its steepest gradient ascent direction reduces the total cost. For ease of notation in this section, we simply use $b(\theta)$ and $o(\theta)$ to represent $b(f(\vect{\theta}))$ and $o(f(\vect{\theta}))$, and use $\vect{b}'$ and $\vect{o}'$ to represent their gradients w.r.t. $\vect{\theta}$. Using these notations, the steepest gradient ascent direction of $L$ w.r.t. $\theta$, when $b(\theta)>0$, will be as follows:
\begin{align*}
    \nabla L = \vect{o}' - \beta b(\theta)\vect{b}'.
\end{align*}

The following theorem shows the condition of $\beta$ that decreases the cost after one gradient ascent update.

\begin{theorem}\label{th:beta_selection_1}
For any $\vect{\theta}$ with $b(\vect{\theta})>0$, $b(\vect{\theta}+\lrate \nabla L)<b(\vect{\theta})$ with a sufficiently small $\lrate >0$ if and only if $\beta > \frac{\vect{b}' \cdot \vect{o}'}{b(\vect{\theta})\vect{b}' \cdot \vect{b}'}$.
\end{theorem}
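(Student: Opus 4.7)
The plan is to reduce the inequality $b(\vect{\theta}+\lrate \nabla L)<b(\vect{\theta})$ for small $\lrate>0$ to a first-order condition via Taylor expansion, and then rearrange that condition into the stated threshold on $\beta$.

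First, I would expand $b$ along the direction $\nabla L$: assuming $b$ is differentiable at $\vect{\theta}$ (which comes from the smoothness of $f$ and of the cost linear form), write
\[
b(\vect{\theta}+\lrate \nabla L)-b(\vect{\theta}) \;=\; \lrate\,\vect{b}'\cdot \nabla L + o(\lrate).
\]
For sufficiently small $\lrate>0$, the left-hand side is strictly negative if and only if the directional derivative $\vect{b}'\cdot \nabla L$ is strictly negative. (The ``if'' direction follows immediately; for ``only if'', note that if $\vect{b}'\cdot \nabla L=0$ or $>0$, the leading behavior is either non-negative or dominated by a non-negative second-order term, so one cannot guarantee $b(\vect{\theta}+\lrate \nabla L)<b(\vect{\theta})$ for all small $\lrate>0$.)

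Next, I would substitute the explicit form of the ascent direction $\nabla L = \vect{o}' - \beta\, b(\vect{\theta})\, \vect{b}'$ into the directional derivative:
\[
\vect{b}'\cdot \nabla L \;=\; \vect{b}'\cdot \vect{o}' \;-\; \beta\, b(\vect{\theta})\, (\vect{b}'\cdot \vect{b}').
\]
Since by hypothesis $b(\vect{\theta})>0$ and (assuming $\vect{b}'\neq 0$, which is needed for the constraint gradient to be meaningful at $\vect{\theta}$) $\vect{b}'\cdot \vect{b}'>0$, the quantity $b(\vect{\theta})\,\vect{b}'\cdot \vect{b}'$ is strictly positive. Dividing through preserves the inequality direction, so $\vect{b}'\cdot \nabla L<0$ is equivalent to
\[
\beta \;>\; \frac{\vect{b}'\cdot \vect{o}'}{b(\vect{\theta})\,\vect{b}'\cdot \vect{b}'},
\]
which is exactly the stated threshold.

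Combining the two equivalences gives the desired ``if and only if''. The only subtle step is the ``only if'' direction at the boundary case $\vect{b}'\cdot \nabla L=0$, where one has to justify that being strictly negative (not merely non-positive) is needed for a uniform small-$\lrate$ decrease; this is where I expect readers might object, so I would spell it out by noting that the $o(\lrate)$ remainder can absorb a zero first-order term in either sign. Aside from that, the argument is a routine first-order Taylor calculation.
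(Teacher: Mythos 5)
Your proposal is correct and is essentially the canonical argument here (the paper defers its proof to an unreleased appendix, but a first-order expansion of $b$ along $\nabla L = \vect{o}' - \beta b(\vect{\theta})\vect{b}'$ followed by dividing by the positive quantity $b(\vect{\theta})\,\vect{b}'\cdot\vect{b}'$ is the only natural route). Your handling of the boundary case $\vect{b}'\cdot\nabla L = 0$ is the right thing to flag --- note that it resolves cleanly under the linearity assumption on $b$ that the paper invokes in its later theorem, since then the remainder vanishes identically and a zero directional derivative gives exactly zero change rather than a possible second-order decrease.
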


At the same time, it is desirable that we do not decrease the KP objective after one gradient ascent update, i.e., $o(\vect{\theta}+\lrate \nabla L) \geq o(\vect{\theta})$. Note that it leads to the best outcome if we can achieve both $b(\vect{\theta}+\lrate \nabla L)<b(\vect{\theta})$ and $o(\vect{\theta}+\lrate \nabla L) \geq o(\vect{\theta})$.

\begin{theorem}\label{th:beta_selection_2}
For any $\vect{\theta}$ with $b(\vect{\theta})>0$, $o(\vect{\theta}+\lrate \nabla L) \geq o(\vect{\theta})$ with a sufficiently small $\lrate >0$ if and only if 
$\beta \begin{cases}
\geq \frac{\vect{o}' \cdot \vect{o}'}{b(\vect{\theta})\vect{b}' \cdot \vect{o}'},\textrm { if } \vect{b}' \cdot \vect{o}' < 0\\
\leq \frac{\vect{o}' \cdot \vect{o}'}{b(\vect{\theta})\vect{b}' \cdot \vect{o}'},\textrm { if } \vect{b}' \cdot \vect{o}' > 0
\end{cases}
$.
\end{theorem}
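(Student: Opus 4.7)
The plan is to mirror the argument used (by this paper's structure) for Theorem \ref{th:beta_selection_1}, namely a first-order Taylor expansion of $o$ along the ascent direction, and then reduce the condition on $\beta$ to a sign analysis on a single inner product. Concretely, for $\lrate>0$ small I would write
\begin{align*}
o(\vect{\theta}+\lrate \nabla L) - o(\vect{\theta}) = \lrate\, \vect{o}'\cdot \nabla L + O(\lrate^{2}),
\end{align*}
so that whenever $\vect{o}'\cdot\nabla L \neq 0$ the sign of the left-hand side agrees with the sign of $\vect{o}'\cdot\nabla L$ for all sufficiently small $\lrate$. This gives both directions of the ``if and only if'': the forward implication uses that if $\vect{o}'\cdot\nabla L<0$ then the first-order term is strictly decreasing and dominates for small $\lrate$, while the reverse implication uses that $\vect{o}'\cdot\nabla L\ge 0$ forces the first-order term to be nonnegative, which is enough to guarantee $o(\vect{\theta}+\lrate\nabla L)\ge o(\vect{\theta})$ for small $\lrate$.

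Next, I would substitute $\nabla L = \vect{o}' - \beta\, b(\vect{\theta})\, \vect{b}'$ and compute
\begin{align*}
\vect{o}'\cdot \nabla L \;=\; \vect{o}'\cdot\vect{o}' \;-\; \beta\, b(\vect{\theta})\, \vect{b}'\cdot\vect{o}'.
\end{align*}
The hypothesis $b(\vect{\theta})>0$ means I can divide by $b(\vect{\theta})$ without changing the direction of any inequality. The condition $\vect{o}'\cdot\nabla L \ge 0$ then becomes
\begin{align*}
\beta\, \vect{b}'\cdot\vect{o}' \;\le\; \frac{\vect{o}'\cdot\vect{o}'}{b(\vect{\theta})},
\end{align*}
and the case split on the sign of $\vect{b}'\cdot\vect{o}'$ (which controls whether dividing by it flips the inequality) produces exactly the two branches in the statement: $\beta \le \frac{\vect{o}'\cdot\vect{o}'}{b(\vect{\theta})\,\vect{b}'\cdot\vect{o}'}$ when $\vect{b}'\cdot\vect{o}'>0$, and $\beta \ge \frac{\vect{o}'\cdot\vect{o}'}{b(\vect{\theta})\,\vect{b}'\cdot\vect{o}'}$ when $\vect{b}'\cdot\vect{o}'<0$. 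The degenerate case $\vect{b}'\cdot\vect{o}'=0$ is not covered by the theorem statement and I would simply note that here $\vect{o}'\cdot\nabla L = \|\vect{o}'\|^{2}\ge 0$ automatically, making the statement vacuous in that case.

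The only delicate point is the ``if and only if'' combined with the weak inequality $o(\vect{\theta}+\lrate\nabla L)\ge o(\vect{\theta})$: strictly speaking the Taylor remainder only lets us conclude strict inequality when the first-order term is strictly of one sign. I would handle this by observing that the boundary equality case $\beta = \frac{\vect{o}'\cdot\vect{o}'}{b(\vect{\theta})\,\vect{b}'\cdot\vect{o}'}$ makes the first-order term vanish, in which case one falls back to smoothness of $o\circ f$ and the weak inequality is still consistent with the claim (one can treat this as a limiting case of the open branches, exactly as the companion theorem does). I expect this boundary bookkeeping to be the only subtle step; the rest is a routine Taylor-plus-sign-analysis argument that parallels Theorem \ref{th:beta_selection_1} almost verbatim, only with $\vect{o}$ in place of $\vect{b}$ in the role of the quantity whose monotonicity is being enforced.
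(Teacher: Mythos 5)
Your proof is correct and follows what is essentially the only natural route here (and the one the paper's deferred appendix proof must also take): a first-order Taylor expansion of $o$ along $\nabla L$, reduction to the sign of $\vect{o}'\cdot\nabla L = \vect{o}'\cdot\vect{o}' - \beta\, b(\vect{\theta})\,\vect{b}'\cdot\vect{o}'$, and a case split on the sign of $\vect{b}'\cdot\vect{o}'$ when dividing. Your flagged caveat about the boundary value of $\beta$ (where the first-order term vanishes and the weak inequality is not strictly guaranteed by the expansion) is a real but minor imprecision in the theorem statement itself, not a gap in your argument.
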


Note that in Thm.~\eqref{th:beta_selection_2}, $\beta$ need not be defined if $\vect{b}' \cdot \vect{o}'=0$, where $o(\vect{\theta}+\lrate \nabla L) \geq o(\vect{\theta})$ is automatically satisfied and we only consider Thm.~\eqref{th:beta_selection_1}. From the above two Thms.~\eqref{th:beta_selection_1} and~\eqref{th:beta_selection_2}, therefore, we can have the following conditions:
\begin{align*}
\left\{\begin{array}{ll} \beta \geq 0 &,\textrm{ if }\vect{b}' \cdot \vect{o}' < 0\\
\beta > 0 &,\textrm{ if }\vect{b}' \cdot \vect{o}' = 0\\
\frac{\vect{b}' \cdot \vect{o}'}{b(\vect{\theta})\vect{b}' \cdot \vect{b}'}<\beta \leq\frac{\vect{o}' \cdot \vect{o}'}{b(\vect{\theta})\vect{b}' \cdot \vect{o}'} &,\textrm{ if }\vect{b}' \cdot \vect{o}' > 0
\end{array}\right. .
\end{align*}

If $\vect{b}' \cdot \vect{o}' < 0$ in the above condition, then $\beta \geq 0$ satisfies both Thm.~\eqref{th:beta_selection_1} and Thm.~\eqref{th:beta_selection_2}. However, if it is not the case that $\frac{\vect{b}' \cdot \vect{o}'}{b(\vect{\theta})\vect{b}' \cdot \vect{b}'} < \frac{\vect{o}' \cdot \vect{o}'}{b(\vect{\theta})\vect{b}' \cdot \vect{o}'}$, then the decrease in the cost (Thm.~\eqref{th:beta_selection_1}) and the non-decrease in the objective (Thm.~\eqref{th:beta_selection_2}) cannot be fulfilled at the same time, in which case we rely only on Thm.~\eqref{th:beta_selection_1}. Considering these conditions, we use the following definition of $\beta$:
\begin{align}\label{eq:final}
\beta = \left\{\begin{array}{ll} 0 &,\textrm{ if }\vect{b}' \cdot \vect{o}' < 0\\
\frac{\vect{b}' \cdot \vect{o}'}{2b(\vect{\theta})\vect{b}' \cdot \vect{b}'} + \frac{\vect{o}' \cdot \vect{o}'}{2b(\vect{\theta})\vect{b}' \cdot \vect{o}'} &,\textrm{ if } (\vect{b}' \cdot \vect{o}' > 0) \wedge \left(\frac{\vect{b}' \cdot \vect{o}'}{b(\vect{\theta})\vect{b}' \cdot \vect{b}'} <  \frac{\vect{o}' \cdot \vect{o}'}{b(\vect{\theta})\vect{b}' \cdot \vect{o}'}\right)\\
\frac{\vect{b}' \cdot \vect{o}'}{b(\vect{\theta})\vect{b}' \cdot \vect{b}'} + \epsilon &,\textrm{ otherwise }
\end{array}\right.
\end{align}where $\epsilon >0$ is a small positive value that we can analytically decide by the following theorem.

% \subsection{Set $\epsilon$}
% Consider the last case, without multiply $b(\vect{\theta})$ to $\epsilon$:
% \begin{align}
%     \beta = \frac{\vect{b}' \cdot \vect{o}'}{b(\vect{\theta})\vect{b}' \cdot \vect{b}'} + \epsilon
% \end{align}

\begin{theorem}
There will be no cost overrun after one gradient ascent update of $\theta$ if $\epsilon = \frac{1}{\lrate \vect{b}' \cdot \vect{b}'}$ and $b(\theta)$ is linear.
\end{theorem}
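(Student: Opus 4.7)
The plan is to use the linearity of $b$ to get an exact finite-step expression for $b(\vect{\theta} + \lrate \nabla L)$, rather than the first-order arguments behind Thms.~\ref{th:beta_selection_1} and~\ref{th:beta_selection_2}, and then to impose the no-overrun condition and read off the resulting $\epsilon$. Since the theorem concerns the cost-overrun regime $b(\vect{\theta})>0$ and the ``otherwise'' branch is the only one where $\epsilon$ appears in Eq.~\eqref{eq:final}, I would fix $\beta = \tfrac{\vect{b}'\cdot\vect{o}'}{b(\vect{\theta})\,\vect{b}'\cdot\vect{b}'}+\epsilon$ and $\nabla L = \vect{o}' - \beta\, b(\vect{\theta})\,\vect{b}'$ throughout the argument.

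The key steps are as follows. First, because $\vect{b}'$ is constant under the linearity assumption, one gets the exact identity
$$b(\vect{\theta}+\lrate\nabla L) \;=\; b(\vect{\theta}) + \lrate\,\vect{b}'\cdot\nabla L \;=\; b(\vect{\theta}) + \lrate\,\vect{b}'\cdot\vect{o}' - \lrate\,\beta\,b(\vect{\theta})\,\vect{b}'\cdot\vect{b}',$$
with no Taylor remainder. Second, I would impose the no-overrun condition $b(\vect{\theta}+\lrate\nabla L)\le 0$ and solve for the threshold on $\beta$: since $b(\vect{\theta})>0$, $\lrate>0$, and $\vect{b}'\cdot\vect{b}'>0$ (the degenerate case $\vect{b}'=\vect 0$ does not arise in the ``otherwise'' branch, where the dot-product bound featuring $\vect{b}'\cdot\vect{b}'$ in the denominator is already being used), dividing through by $\lrate\,b(\vect{\theta})\,\vect{b}'\cdot\vect{b}'$ yields
$$\beta \;\ge\; \frac{\vect{b}'\cdot\vect{o}'}{b(\vect{\theta})\,\vect{b}'\cdot\vect{b}'} + \frac{1}{\lrate\,\vect{b}'\cdot\vect{b}'}.$$
Third, comparing this with the ``otherwise'' branch of Eq.~\eqref{eq:final} shows that the choice $\epsilon = \tfrac{1}{\lrate\,\vect{b}'\cdot\vect{b}'}$ is exactly the value that saturates the bound, producing $b(\vect{\theta}+\lrate\nabla L)=0$ and hence no overrun.

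The algebra itself is essentially a one-line substitution, so the main obstacle is conceptual rather than technical: one must recognize that the statement pertains specifically to the ``otherwise'' branch of Eq.~\eqref{eq:final} (the only branch that features $\epsilon$), and that the linearity of $b$ is precisely the hypothesis that upgrades the ``for a sufficiently small $\lrate$'' directional-derivative reasoning of Thm.~\ref{th:beta_selection_1} into an exact finite-step guarantee valid for any $\lrate>0$. Without linearity, the same derivation would only yield the bound modulo a higher-order Taylor remainder in $\lrate$, which explains why the hypothesis is explicitly assumed here.
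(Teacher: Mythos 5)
Your proof is correct and is essentially the canonical argument the paper intends (its own proof is deferred to the appendix): you correctly restrict attention to the ``otherwise'' branch of Eq.~\eqref{eq:final}, use linearity of $b$ to write $b(\vect{\theta}+\lrate\nabla L)=b(\vect{\theta})+\lrate\,\vect{b}'\cdot\vect{o}'-\lrate\beta\,b(\vect{\theta})\,\vect{b}'\cdot\vect{b}'$ exactly, and observe that substituting $\beta=\frac{\vect{b}'\cdot\vect{o}'}{b(\vect{\theta})\vect{b}'\cdot\vect{b}'}+\epsilon$ with $\epsilon=\frac{1}{\lrate\,\vect{b}'\cdot\vect{b}'}$ yields $b(\vect{\theta}+\lrate\nabla L)=b(\vect{\theta})\bigl(1-\lrate\epsilon\,\vect{b}'\cdot\vect{b}'\bigr)=0$. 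Your added observation that this $\epsilon$ is exactly the value saturating the no-overrun threshold on $\beta$ is a correct and useful refinement.
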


The final proposed adaptive gradient ascent method is shown in Alg.~\ref{alg:adaptive-gd}. Let $\theta=\{e_1,e_2,\ldots,e_n\}$ be the set of the one-dimensional parameters to learn. At line~\ref{alg:mini}, we choose a $\xi$ portion of $\theta$ as a mini-batch $\bar{\theta}$ which is updated at line~\ref{alg:opt} --- note that line~\ref{alg:opt} does not use any advanced optimizers, such as Adam, RMSprop, and so forth: we leave supporting other optimizers as future work. Line~\ref{alg:beta} is to dynamically adjust $\beta$ if there is cost overrun. Line~\ref{alg:stop} is to check the early-stopping condition. If there are no improvements in the past $k$ epochs, we stop the process.
\begin{figure}[t]
\begin{minipage}{0.4\textwidth}
\centering
\includegraphics[width=0.97\columnwidth,trim={0.3cm 0.3cm 0.7cm 0.3cm},clip]{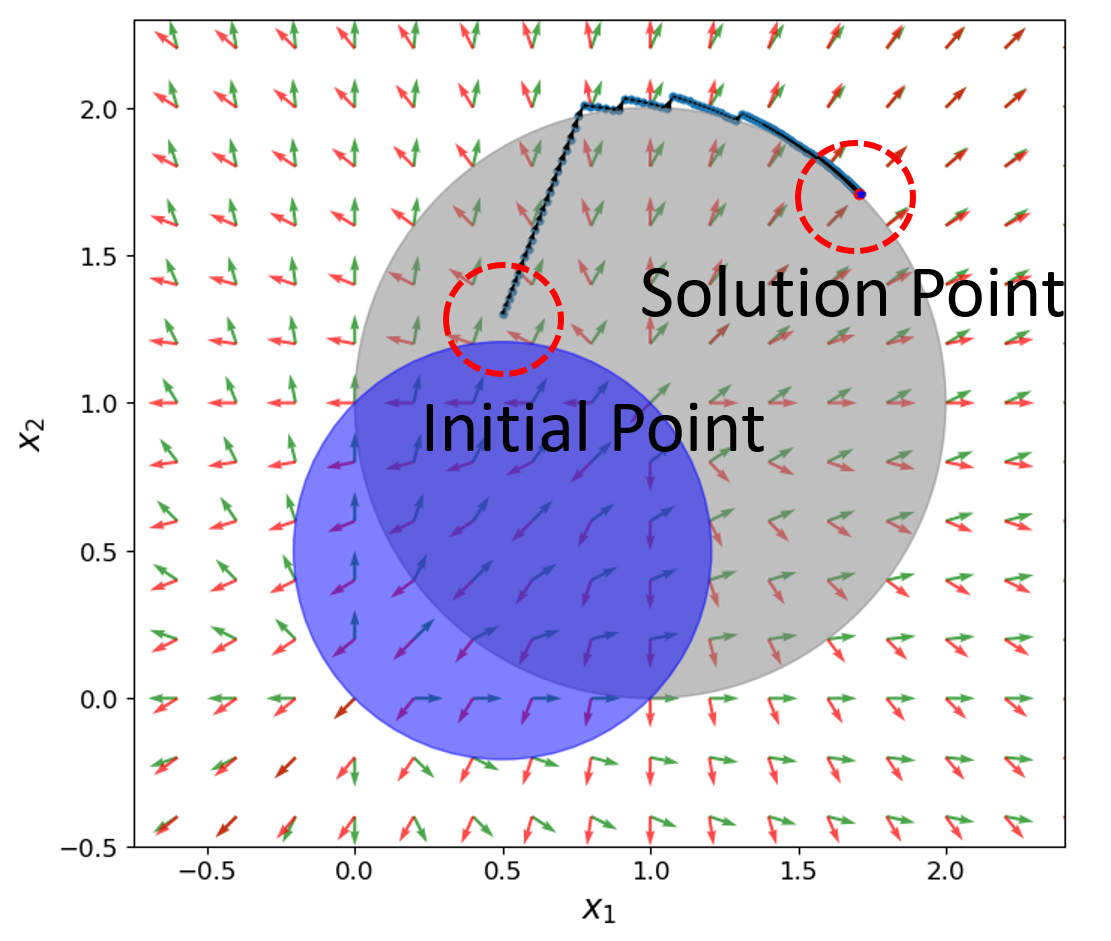}
% \vspace{-0.2em}

% (a)

% \includegraphics[width=0.97\columnwidth,trim={0.3cm 0.3cm 0.7cm 0.3cm},clip]{}
% \vspace{-0.2em}

% (b)
\end{minipage}\hfill
\begin{minipage}{0.57\textwidth}
    	\begin{algorithm}[H]
    	\scriptsize
		\SetAlgoLined
		\caption{Adaptive gradient ascent}\label{alg:adaptive-gd}
		\KwIn{$\vect{\theta}_0$, $k$, $\lrate$}
		\KwOut{$\vect{x}$}
		$tolerance \gets k$\tcc*[r]{Initialize the early stopping threshold}
		$\vect{\theta} \gets \vect{\theta}_0$\tcc*[r]{Initialize neural network weights}
        $\beta \gets 0 $\tcc*[r]{Initialize $\beta$}
        $best \gets 0$\;
		\While {not converged}{
		    $\bar{\theta} \gets$ a $\xi$ portion of $\theta$\tcc*[r]{Mini-batch selection}\label{alg:mini}
			$\bar{\theta} \gets \bar{\theta}+\lrate \nabla L$\tcc*[r]{Gradient ascent}\label{alg:opt}
			$\vect{x} \gets f(\theta)$\;
			\eIf{$b(\vect{x}) > 0$ or $o(\vect{x}) \leq best$}{
			    $tolerance \gets tolerance - 1$\;
			}{
			    $tolerance \gets k$\;	
			    $best \gets o(\vect{x})$\;
			}
			\lIf{$tolerance \leq 0$}{break}\label{alg:stop}
			\eIf{$b(\vect{x})>0$}{
				$\beta \gets$ Eq.~\eqref{eq:final}\tcc*[r]{Dynamically adjust $\beta$}\label{alg:beta}
			}{
			$\beta \gets 0$ \;
			}
		}
\Return $\vect{x}$
	\end{algorithm}
\end{minipage}
\caption{Trajectory of optimizing $x_1^2 + x_2^2$ with a constraint $(x_1-1)^2 + (x_2-1)^2 \leq 1$ from an initialization point. Note that it converges to the optimal solution. Gray circle means there is no cost overrun and blue circle means $\mathbf{b}'\cdot\mathbf{o}' < 0$, i.e., their directions are opposite to each other. Green arrow represents $\mathbf{o}'$ and red arrow $\mathbf{b}'$. Note that our algorithm pulls the trajectory toward the gray circle as soon as there is any small cost overrun.}
	    \label{fig:eg}
\end{figure}

In Fig.~\ref{fig:eg}, we show an example of optimizing a simple mathematical function that consists of two variables $x_1$ and $x_2$ with a constraint. Starting from an initialization of $x_1$ and $x_2$, it successfully converges to the optimal solution. If there happens any small cost overrun, our method effectively removes it by controlling $\beta$ as shown in the figure.

\section{Experiments}
We test our proposed method with various 0-1 KPs. In the first experiment, we test with standard benchmark KP instances and in the second round, we show how our method can be used in the point cloud resampling. We used CUDA 10, TensorFlow 1.11.0, Numpy 1.14.5, Scipy 1.1.0, and machines with i9 CPU and GTX1080Ti. 

\subsection{Benchmark KP Instances}
We test with classical KP instances, whose problem formulations follow Eq.~\eqref{eq:f1}, in this subsection.

\noindent\textbf{Datasets.}
We use the benchmark KP instances released by Pisinger~\cite{Pisinger:2005:HKP:1063636.1063640}. It contains many realistic KP instances where product values/costs and budgets are already decided --- we do not change anything but strictly follow their instances; so we set $B$ to the budget given in each instance. Among them, we use one large-scale configuration, $n=1,000,000$. We are not interested in smaller scales because they can be exactly solved with many other existing techniques.

These KP instances can be classified into 6 types depending on the characteristics of $v_i$ and $c_i$ for each product $i$. One representative type is \emph{spanner} where $v_i$ and $c_i$ are multiples of a quite small set of numbers that is called \emph{spanner set}. Table~\ref{tbl:kp} shows the spanner set $span(2,10)$ where the two parameters 2 and 10 were chosen by Pisinger. We strictly follow his benchmark design.

\noindent\textbf{Baselines and Hyper-parameters.}
We tested seven baselines: Expknap~\cite{Pisinger93anexpanding-core}, Minknap~\cite{1997:MAK:2752768.2752778}, Combo~\cite{doi:10.1287/mnsc.45.3.414}, Gurobi~\cite{gurobi}, LocalSolver (LS)~\cite{ls}, OR-Tools by Google AI~\cite{ortools}, and a greedy algorithm. The greedy algorithm first sorts all products in descending order of $v_i/c_i$ and then makes greedy selections from the top ranked product until the budget runs out. It is already known that this greedy selection is close to optimal. However, this cannot be used to solve Eq.~\eqref{eq:f2} because product values are not fixed.

Except the greedy, all return optimal solutions. These baselines resort to many techniques such as dynamic programming, branch and bound, integer linear programming with LP-relaxation, Horowitz-Sahni decomposition, and so on. However, only few of them are scalable up to $n=1,000,000$. The reinforcement learning-based combinatorial optimization method~\cite{DBLP:journals/corr/BelloPLNB16} is not scalable because its LSTM-based actor cannot process a list of one million products. We give an hour for each solver to solve each instance. For our method, we use $\xi=0.1$ as the mini-batch size. We set the initialization of $e_i$ to zero for all $i$, $\lrate$ to 0.1, and $k$ to 100. We use STE with the slope annealing step size $s=50$ and the change rate $r=1.01$ to generate $x_i$ from $e_i$.

% We also have a theoretical proof that Greedy is a simpler case of our method (see Appendix Thm.~\eqref{thm:greedy}) --- our method is theoretically much more flexible than Greedy.

\noindent\textbf{Experimental Results.}
In Table~\ref{tbl:kp}, we summarize all the results. In general, Combo and Gurobi show better performance than other baselines but in a few types, they could not solve in an hour. OR-Tools, LS, Expknap, and Minknap are all timed out. We think that Gurobi shows great performance considering its wide applicability --- in fact, Gurobi is one of the most popular tools to solve (mixed integer) liner programming, quadratic programming, etc. Our proposed method solves all instances in less than about a minute and our method is the only one which solves all the instances.

% We also conduct one more large-scale experiment where $n=2e+7$. We contacted the author of the benchmark dataset and generated such a large-scale KP instance for the two difficult types, Strongly correlated and Uncorrelated, where state-of-the-art methods show relatively low performance.

\begin{table}[t]
\footnotesize
\centering
\caption{Results on benchmark KP instances with 6 types. Each type defines how to generate product values and costs. Due to the very large scale of objective values, we only show whether solution is optimal or not. `Timeout' means it cannot be solved in an hour in our experiments. OR-Tools, LS, Expknap, and Minknap are timed out in all cases and are not listed in this table.}\label{tbl:kp}
\begin{tabular}{|c|c|c|c|c|c|c|}
\hline
\multirow{2}{*}{Method} & \multicolumn{2}{c|}{Uncorr. Span(2,10)} & \multicolumn{2}{c|}{Weak. Corr. Span(2,10)} & \multicolumn{2}{c|}{Str. Corr. Span(2,10)} \\ \cline{2-7} 
 & Is optimal? & Time & Is optimal? & Time & Is optimal? & Time \\ \hline
Combo & O & 329 sec & O & 3018 sec & O & 19.13 sec\\ \hline
Greedy & X & < 1 sec & X & < 1 sec & X & < 1 sec \\ \hline
Gurobi & O & \textbf{2.61} sec & O & 2.78 sec & O & \textbf{2.70} sec  \\ \hline
% \begin{tabular}[c]{@{}c@{}}OR-Tools, LS\\ Expknap, Minknap\end{tabular} & N/A & Timeout & N/A & Timeout & N/A & Timeout \\ \hline
Ours & O & 4.46 sec & O & \textbf{1.70} sec & O & 3.17 sec  \\ \hline\hline
\multirow{2}{*}{Method} & \multicolumn{2}{c|}{Strongly correlated} & \multicolumn{2}{c|}{Inverse strongly correlated} & \multicolumn{2}{c|}{Uncorrelated} \\ \cline{2-7} 
 & Is optimal? & Time & Is optimal? & Time & Is optimal? & Time  \\ \hline
Combo & O & \textbf{< 1} sec & O & \textbf{< 1} sec & N/A & Timeout  \\ \hline
Greedy & X & < 1 sec & X & < 1 sec & X & < 1 sec  \\ \hline
Gurobi & N/A & Timeout & O & 19.39sec & O & 21.71 sec \\ \hline
% \begin{tabular}[c]{@{}c@{}}OR-Tools, LS\\ Expknap, Minknap\end{tabular} & N/A & Timeout & N/A & Timeout & N/A & Timeout \\ \hline
Ours & O & 60.16 sec & O & 4.72 sec & O & \textbf{< 1} sec   \\ \hline
\end{tabular}
\end{table}

\subsection{Point Cloud Resampling}
In this experiment, we show that our method works well to optimize a neural network-based objective. Point clouds are usually generated by a large set of collected (scanned) points on the external surface of objects from 3D scanners. Point cloud resampling (PCR) is to choose a subset of scanned points to reduce the space and time overheads of point processing algorithms. However, there are no well-defined metrics to evaluate and perform resampling. Each resampling algorithm relies on its own heuristic method.

We solve the KP problem in Eq.~\eqref{eq:f2} where $g$ includes a point cloud classification model and $B$ is the number of desired points in ratio --- i.e., find a $B$ portion of points that maximize the quality of resampling. We introduce our formal problem definition to solve as follows:
\begin{align}\begin{split}\label{eq:proposed}
    \min_{\theta} &\;\; \sigma(\ell, m(\theta_m, \mathbf{P} \circ f(\theta)))\\
        % \min_{\theta} &\;\; \| \mathbf{h}_\mathbf{P} - \mathbf{h}_{\mathbf{P} \circ f(\theta)} \|_2 \\
    \textrm{subject to} &\;\; \frac{\|f(\theta)\|_1}{n} \leq B,
\end{split}\end{align}where $\mathbf{P} \in \mathbb{R}^{n \times 3}$ is a set (matrix) of 3-dimensional points scanned from one object; $m$ is a point cloud classification model, in particular PointNet~\cite{DBLP:journals/corr/QiSMG16} in our experiments; $\theta_m$ is its model parameters; $f(\theta)$ outputs a column vector $\mathbf{x} \in \{0,1\}^{n}$; `$\circ$' denotes the row-wise multiplication\footnote{Because of the PointNet design, this row-wise multiplication is identical to selecting/deselecting points.} to apply the selection vector $\mathbf{x}$ to $\mathbf{P}$; and $\sigma$ is the cross-entropy loss created from the ground-truth label $\ell$ and the predicted label by $m$. Our problem definition is greatly inspired by \emph{attention} where neural networks focus on a meaningful subset of information~\cite{DBLP:journals/corr/VaswaniSPUJGKP17}. Although Eq.~\eqref{eq:proposed} is a minimization problem, we can still apply the adaptive gradient ascent by maximizing its negative value.

\noindent\textbf{Datasets.}
We use the Princeton ModelNet40~\cite{7298801} which contains 12,308 samples from 40 different object classes, e.g., sofa, desk, chair, etc. 2,468 samples were reserved for our resampling test and others were used to train PointNet.

\noindent\textbf{Baselines and Hyper-parameters.}
We tested four baselines: random resampling (Random~\cite{cloudcompare}), Octree~\cite{cloudcompare}, weighted locally optimal projection (WLOP~\cite{cgal:eb-19a}), and grid resampling (Grid~\cite{cgal:eb-19a}). As there are no commonly-accepted evaluation metrics for PCR, we compare our KP-based PCR with the baselines using the classification performance by PointNet.

% two metrics: i) We measure the Fr\'echet Inception distance (FID~\cite{DBLP:journals/corr/HeuselRUNKH17}) and Maximum Mean Discrepancy (MMD~\cite{Gretton:2012:KTT:2188385.2188410}) between the hidden vectors of original and resampled points, ii) We measure the classification performance by PointNet.

We set $B=0.05$, $\xi=0.1$, $\lrate=0.0001$, $k=3,000$, and initialized $e_i$ with the LeCun normal initializer~\cite{Sutskever:2013:IIM:3042817.3043064} for our method. In particular, we tested with a low resampling ratio $B=0.05$, which is highly challenging. We use PTE to generate $x_i$ from $e_i$. For those baselines, we set the same resampling ratio of $0.05$. However, Octree and Grid sometimes resample more than 0.05 and their average resampling ratios are 0.06 as shown in Table~\ref{tbl:pcr}.

\noindent\textbf{Experimental Results.}
We feed each resampling result to PointNet and check if PointNet correctly recognizes its original class. In Table~\ref{tbl:pcr}, we report the accuracy of each resampling method. Our method shows the best accuracy, which proves that our method works well even with the complicated neural network-based objective. We also show some resampling results in Figure~\ref{fig:airplane}. In the figure, note that our result has the highest logit value. While this may be controversial, PointNet thinks our result is the best, which shows a potential discrepancy between machine and human perception --- recall that our purpose is not to resample points compliant with human visual perception but to show the proposed optimization technique works well. In Figure~\ref{fig:airplane}, Grid shows good visual quality, but its resampling ratio is 5.7\%.

\begin{table}[]
\footnotesize
\centering
\caption{PointNet accuracy after resampling points with various methods.}\label{tbl:pcr}
\begin{tabular}{|c|c|c|c|c|c|c|}
\hline
 & \multirow{2}{*}{\begin{tabular}[c]{@{}c@{}}Original\\(resampling ratio = $1.0$)\end{tabular}} & \multirow{2}{*}{\begin{tabular}[c]{@{}c@{}}Random\\($0.05$)\end{tabular}} & \multirow{2}{*}{\begin{tabular}[c]{@{}c@{}}Octree\\($0.06$)\end{tabular}} & \multirow{2}{*}{\begin{tabular}[c]{@{}c@{}}WLOP\\($0.05$)\end{tabular}} & \multirow{2}{*}{\begin{tabular}[c]{@{}c@{}}Grid\\($0.06$)\end{tabular}} & \multirow{2}{*}{\begin{tabular}[c]{@{}c@{}}Ours\\($0.05$)\end{tabular}} \\
 &  &  &  &  &  &  \\ \hline
% \begin{tabular}[c]{@{}c@{}}PointNet\\Accuracy\end{tabular} & 0.8842 & 0.6353 & 0.7208 & 0.7070 & 0.8779 & 0.8194 &  0.6818\\ \hline
PointNet Micro F-1 & 0.8842 & 0.6353 & 0.7208 & 0.7070 & 0.7414 & 0.8268\\ \hline
PointNet Macro F-1 & 0.8304 & 0.5398 & 0.6344 & 0.6150 & 0.6607 & 0.7457\\ \hline
% User Study & N/A & & N/A & &  N/A & \\ \hline
% \begin{tabular}[c]{@{}c@{}}PointNet\\Weighted F-1\end{tabular} &0.8835  & 0.6373 & 0.7288 & 0.7096 & 0.8773 & 0.8228 0.6805&  \\ \hline
%FID & 0 $\pm$ 0 &88.6$\pm$ 33.7 &55.9$\pm$ 21.5 &63.8$\pm$ 22.8 &2.6$\pm$ 0.9 &36.7$\pm$ 27.6 &87.3$\pm$ 31.0  \\ \hline
%MMD (E-05) & 0 $\pm$ 0 &11.54 $\pm$5.06 &8.02$\pm$ 3.77 &9.04 $\pm$3.33 &0.22 $\pm$0.10 &4.13 $\pm$4.27 &11.46 $\pm$5.10  \\ \hline
% FID & 0 $\pm$ 0&90.2 $\pm$34.2 &54.4 $\pm$20.8 &63.8 $\pm$21.5 &2.5$\pm$ 0.8 &36.1 $\pm$26.9 &88.6 $\pm$30.1\\ \hline

% MMD (E-05) & 0 $\pm$ 0 &11.57$\pm$ 5.16 &7.77$\pm$ 3.40 &8.95$\pm$ 3.21 &0.20 $\pm$0.09 &3.86$\pm$ 3.98 &11.45 $\pm$5.04\\ \hline
\end{tabular}
\end{table}	

% \begin{table}[]
% \centering
% \caption{FID and MMD after resampling points with various methods}
% \begin{tabular}{|c|c|c|c|c|c|c|}
% \hline
%  & Random & Octree & WLOP & Grid & Hierarchy & Ours \\ \hline
% FID &  &  &  &  &  &  \\ \hline
% MMD &  &  &  &  &  &  \\ \hline
% \end{tabular}
% \end{table}

	\begin{figure}
	    \centering
	    \subfigure[Original]{\includegraphics[width=0.24\columnwidth]{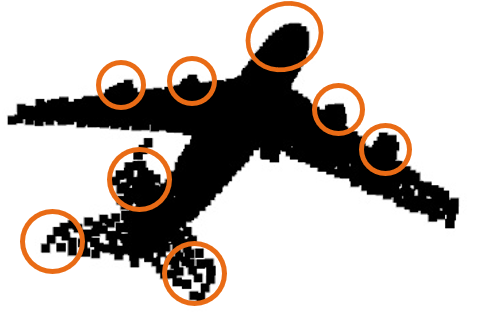}}
	    \subfigure[Ours (ratio=5\%,    logit=40.8)]{\includegraphics[width=0.24\columnwidth]{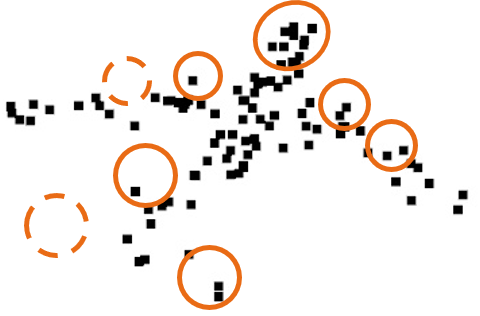}}
	    \subfigure[Grid (ratio=5.7\%,    logit=37.3)]{\includegraphics[width=0.24\columnwidth]{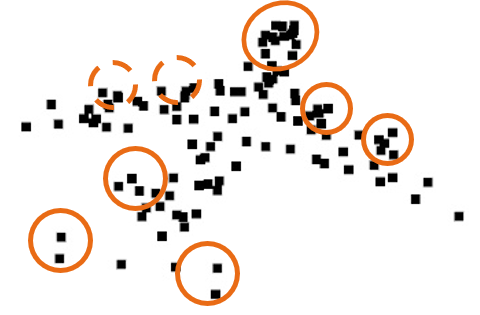}}
	    \subfigure[WLOP (ratio=5\%,    logit=31.4)]{\includegraphics[trim={0 0 0 2.2cm},clip,width=0.24\columnwidth]{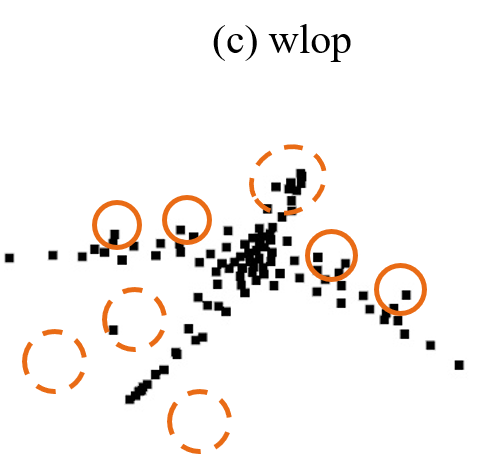}}
	   % \subfigure[Octree]{\includegraphics[width=0.24\columnwidth]{}}
	   %\subfigure[Original]{\includegraphics[width=0.24\columnwidth]{}}
	   % \subfigure[Ours]{\includegraphics[width=0.24\columnwidth]{}}
	   % \subfigure[WLOP]{\includegraphics[width=0.24\columnwidth]{}}
	   % \subfigure[Octree]{\includegraphics[width=0.24\columnwidth]{}}
	   % \subfigure[Put PCR examples here]{\includegraphics[width=0.3\columnwidth]{images/Picture4.png}}
	    \caption{PCR Visualization. The logit of the airplane class is extracted from PointNet. We remove Random and Octree due to their relatively poor quality for this object.}\label{fig:airplane}
	\end{figure}

\section{Conclusion}
We proposed a novel method to solve large-scale 0-1 KPs using advanced deep learning techniques. We showed that our method is better at solving 0-1 KPs than other mathematical solvers. In our application to PCR, the proposed adaptive gradient ascent method optimized the neural network-based complicated objective well.

\nocite{*}

\end{document}